\pdfoutput=1

\documentclass[11pt]{article}

\usepackage[final]{acl}

\usepackage{times}
\usepackage{latexsym}

\usepackage[T1]{fontenc}

\usepackage[utf8]{inputenc}

\usepackage{microtype}

\usepackage{inconsolata}

%
%

\usepackage{amstext}
\usepackage{amsmath}
\usepackage{amssymb}
\usepackage{subcaption}
\usepackage{graphicx}
\usepackage{booktabs}
\usepackage{multirow}
\usepackage{tikz-dependency}
\usepackage{amsthm}
\usepackage{bbm}
\usepackage{thmtools}
\usepackage{thm-restate}
\usepackage{stfloats}
\newtheorem{lemma}{Lemma}

\title{Layer-Condensed KV Cache for Efficient Inference of\\ Large Language Models}



\author{Haoyi Wu \and Kewei Tu\thanks{\; Corresponding author.} \\
School of Information Science and Technology, ShanghaiTech University \\
Shanghai Engineering Research Center of Intelligent Vision and Imaging \\
\texttt{\{wuhy1, tukw\}@shanghaitech.edu.cn}}

\begin{document}
\maketitle
\begin{abstract}
  Huge memory consumption has been a major bottleneck for deploying high-throughput large language models in real-world applications. In addition to the large number of parameters, the key-value (KV) cache for the attention mechanism in the transformer architecture consumes a significant amount of memory, especially when the number of layers is large for deep language models. In this paper, we propose a novel method that only computes and caches the KVs of a small number of layers, thus significantly saving memory consumption and improving inference throughput. 
  Our experiments on large language models show that our method achieves up to 26$\times$ higher throughput than standard transformers and competitive performance in language modeling and downstream tasks. 
  In addition, our method is orthogonal to existing transformer memory-saving techniques, so it is straightforward to integrate them with our model, achieving further improvement in inference efficiency.
  Our code is available at \url{https://github.com/whyNLP/LCKV}.
\end{abstract}

\section{Introduction}
High throughput and low latency are essential for deploying large language models (LLMs) in real-world applications \cite{tillet2019triton,kwon2023efficient}. However, the huge memory consumption of LLMs has been a major bottleneck, preventing a large batch size and high throughput generation. Among the memory-consuming components, the key-value (KV) cache is one of the most significant parts \cite{pope2023efficiently,zhang2023ho} that takes over 30\% of the GPU memory during deployment \cite{kwon2023efficient}. The KV cache is used to store the keys and values in each transformer layer during generation to avoid re-computation. Its memory consumption is proportional to both the sequence length and the number of layers.

There have been substantial works on reducing the memory consumption of the KV cache in LLMs. Most of them focus on compressing the KV cache by reducing the length of the cached KV sequence. For example, \citet{jiang-etal-2023-llmlingua,li-etal-2023-compressing,mu2023learning} compress the prompts to save the memory consumption. \citet{ren-etal-2023-context} incrementally compress a specified span of tokens into compact ones to reduce the KV cache length. \citet{xiao2024efficient,han2023lm} propose to store only the KVs of initial and recent tokens in the KV cache. \citet{zhang2023ho} propose a dynamic KV cache eviction policy to only keep a small portion of the KV cache in memory.

\begin{figure}[tb]
  \centering
  \begin{subfigure}{.22\textwidth}
    \centering
    \includegraphics[page=1,width=\textwidth,trim=360 150 350 130,clip]{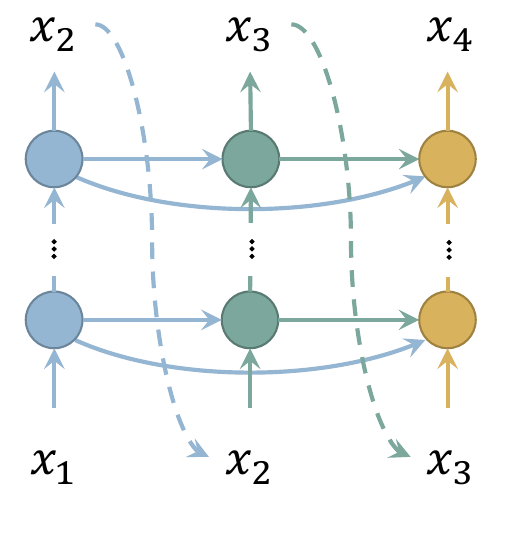}
    \caption{Standard transformer}
  \end{subfigure}
  \quad
  \begin{subfigure}{.22\textwidth}
    \centering
    \includegraphics[page=2,width=\textwidth,trim=360 150 350 130,clip]{figs/imgs.pdf}
    \caption{Our model}
  \end{subfigure}
  \caption{Illustration of a standard transformer decoder and our model. Each node represents one layer of transformer computation of one token. Each horizontal edge $a \rightarrow b$ denotes that the queries at $b$ are paired with the KVs at $a$.} 
  \label{fig:cover}
\end{figure}

In this paper, we propose to reduce the memory consumption of the KV cache from a novel perspective that is orthogonal to previous efforts: reducing the number of layers. Specifically, we present a new variant of transformer decoders in which queries of all layers are paired with keys and values of just the top layer, as illustrated in Figure~\ref{fig:cover}.
In this way, we only need to cache the keys and values of one layer (vs. tens of layers in a typical LLM), significantly saving memory consumption while introducing no computation overheads during inference. In fact, since we only need the keys and values of the top layer, we can omit the key-value computation and discard the key-value parameters for all the other layers, further improving the throughput and reducing the memory consumption as well as the mode size.

We draw our inspiration from the interpretation of the stacking layer structure of a transformer as an iterative process of improving token representation \cite{wu-tu-2023-probabilistic}. In this interpretation, the representation at the top layer is the most informative, so it makes sense to attend only to the top layer.
We also note the similarity of our idea to the cross-attention mechanism in a standard transformer encoder-decoder, in which all the decoder layers attend to the top encoder layer. However, applying this idea to a decoder has never been attempted before as far as we know.

Although our model presented above achieves very efficient inference, its performance in language modeling and downstream tasks degrades in comparison with standard transformers. 
Therefore, we further propose to retain standard attention for a small number of layers in our model, which slightly diminishes our saving of the KV cache memory consumption but leads to almost no performance degradation.


Another challenge faced by our model is training. When training a standard transformer decoder, the computation of all the tokens can be fully parallelized. In our model, however, the computation at each token depends on the top layer of the previous token, creating sequential dependencies that spoil parallel training.
We address the challenge by deriving an approximate training method that supports parallel training.

Our experiments on Llama \cite{touvron2023llama} show that our model achieves up to 32$\times$ larger batch sizes and up to 26$\times$ higher throughput than standard transformers for LLMs of 1B--30B parameters; at the same time, our model has competitive performance to standard transformers in language modeling and downstream tasks. 
We further empirically demonstrate that it is straightforward to integrate our model with other memory-saving techniques like StreamingLLM \cite{xiao2024efficient}, achieving further improvements in inference efficiency.

We summarize our contributions as follows: 1) we propose a new variant of transformer decoders that reduces the memory consumption of the KV cache by dramatically reducing the number of cached layers; 2) we make parallel training of our model feasible by designing a novel approximate training method; 3) we conduct extensive experiments to verify and analyze the effectiveness and efficiency of our method.

\section{Layer-Condensed KV Cache}

\subsection{Model}
\label{sec:model}
As shown in Figure~\ref{fig:cover}(b), we pair the queries of all layers with KVs of only the top layer, so that we do not have to cache or even compute KVs for layers other than the top layer, saving both memory consumption and computation.
Furthermore, since we no longer need to compute KVs for these layers, nor do we need to keep the weights $W_K, W_V$ that map hidden representations to KVs for these layers, thus also saving model parameters.

One problem with this method is that, since each token also attends to itself, we need its top-layer KVs for its attention computation at lower layers, but the top-layer cannot be computed until we finish the computation of all the lower layers. A straightforward solution to this cyclic dependency problem is to drop the attention of each token to itself, which is equivalent to masking the diagonal of the attention matrix. Now the first token of the sequence has nothing to attend to, so we just use zero vectors as dummy KVs in its attention computation. Note that even without self-attention of each token, its information can still be incorporated in its bottom-up computation thanks to residual connections.
Empirically, we find that the diagonal mask of the attention matrix does not affect the performance of the model.

It has been previously observed that transformers tend to attend to syntactic information in lower layers and semantic information in higher layers \cite{clark-etal-2019-bert}. Intuitively, applying KVs of the same layer to queries of all layers might break this pattern. Empirically, we do find that our method as described above underperforms standard transformers in language modeling and downstream tasks.
A simple yet effective solution to this problem is to retain standard attention for a small number of layers, which we call \emph{warmup layers}, and only apply our method to the rest of the layers. Inspired by the sandwich module of \citet{reid-etal-2021-subformer-exploring}, we propose to keep the top $w/2$ layers and the bottom $w/2$ layers as warmup layers. 
We empirically find that such a sandwich configuration outperforms alternative configurations and leads to almost no performance degradation compared with standard transformers. 

\subsection{Training}
\label{sec:training}

Though the inference process of our model is straightforward and almost identical to that of a standard transformer, i.e., decoding one token at a time from left to right, the training process of our model is more complicated. Since each token relies on the KVs of the top layer of previous tokens, it is impossible to train on a sequence of tokens in parallel as in a standard transformer.
Below we derive a parallel training process of our model in three steps.
For simplicity, we assume no warmup layers (i.e., $w=0$). It is straightforward to add warmup layers in the derived training process.

\subsubsection{From Sequential to Parallel Training}
During training, we minimize the cross entropy loss for each token. In our model, the computation of each token depends on the top-layer KVs of its previous tokens. Therefore, for a token sequence of length $n$, training is done sequentially on a computation graph consisting of $n$ passes of bottom-up transformer computation, as shown in Figure~\ref{fig:iter-train}(a).

In the following, we present a different training computation graph and show its equivalence to the original computation graph.
Specifically, we perform $n$ iterations of bottom-up transformer computation on all tokens in parallel, and in each iteration, we pair the queries of all layers with KVs of the top layer from the previous iteration, thus breaking the sequential dependencies within the iteration. We compute the cross entropy loss only after the last iteration. Note that the first iteration has no previous iteration, and we pair its queries with dummy KVs which are zero vectors. Figure~\ref{fig:iter-train}(b) shows the new computation graph.

\begin{restatable}{theorem}{train}
  \label{thm:train}
  The two computation graphs are equivalent in terms of model training.
\end{restatable}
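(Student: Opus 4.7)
The plan is to prove equivalence by showing the two computation graphs implement the same function of the parameters on the same inputs, from which equivalence of gradients (and hence of training) follows automatically. The key object is the top-layer representation of each token, which is the only quantity passed between tokens via KV dependencies.

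First I would set up notation. Let $\tilde{h}_i$ denote the top-layer output of token $i$ in the sequential graph (a), produced in the $i$-th pass using the top-layer representations $\tilde{h}_1, \ldots, \tilde{h}_{i-1}$ of earlier tokens as KVs (with a dummy zero KV for $i=1$, consistent with the diagonal-masked attention described in Section~\ref{sec:model}). Let $h^{(k)}_i$ denote the top-layer output of token $i$ at iteration $k$ of the parallel graph (b); by construction, $h^{(k)}_i$ is obtained from a full bottom-up pass in which every layer's queries at position $i$ attend to $\{h^{(k-1)}_j : j < i\}$, with $h^{(0)}_j$ defined as zero vectors.

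The core step is an induction on $k$ establishing the invariant
\begin{equation*}
h^{(k)}_i = \tilde{h}_i \quad \text{for all } i \le k.
\end{equation*}
The base case $k=1$ holds because $h^{(1)}_1$ is computed from the all-zero dummy KVs, which coincides with pass $1$ of graph (a). For the inductive step, fix $i \le k+1$. At iteration $k+1$, token $i$ attends only to positions $j < i \le k$, so by the inductive hypothesis its inputs are exactly $\tilde{h}_1, \ldots, \tilde{h}_{i-1}$; since the per-token bottom-up computation is the same deterministic function of these inputs and the shared parameters in both graphs, $h^{(k+1)}_i = \tilde{h}_i$. Applied at $k=n$, this gives $h^{(n)}_i = \tilde{h}_i$ for every token in the sequence.

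Finally I would note that the per-token cross-entropy losses are computed as the same function of the top-layer representations in both graphs, so the total training loss in (b), computed after the last iteration, equals the total training loss in (a). Because both losses are the same differentiable function of the parameters, they yield identical gradients, and hence identical parameter updates under any gradient-based optimizer; this is exactly the sense in which the graphs are equivalent for training. The main subtlety, and the one I would be most careful about, is the boundary behavior enforced by the diagonal mask: one must confirm that the zero dummy KV used at iteration $1$ in (b) matches the zero dummy KV used by the first token in (a), and that the causal mask $j < i$ (strict) is what allows the inductive hypothesis to cover exactly the needed positions without requiring $h^{(k)}_{k+1}$.
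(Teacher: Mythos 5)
Your proposal is correct and follows essentially the same route as the paper's own proof: an induction on the iteration index showing that token $i$'s top-layer computation in the parallel graph agrees with the sequential graph for all iterations $\geq i$, with the diagonal mask handling the base case, followed by the observation that equal top-layer representations yield equal losses and hence equivalent training. Your added remark that equal loss functions imply equal gradients makes explicit a step the paper leaves implicit, but the argument is the same.
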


\begin{figure}[tb]
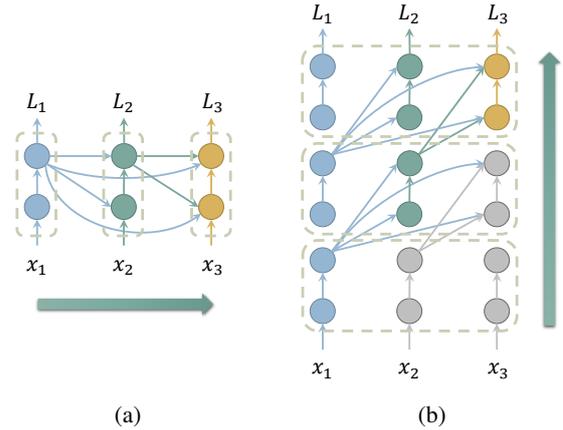

  \centering
  \begin{subfigure}{.22\textwidth}
    \centering
    \includegraphics[page=3,width=\textwidth,trim=340 80 330 30,clip]{figs/imgs.pdf}
    \caption{}
  \end{subfigure}
  \quad
  \begin{subfigure}{.22\textwidth}
    \centering
    \includegraphics[page=4,width=\textwidth,trim=360 80 310 30,clip]{figs/imgs.pdf}
    \caption{}
  \end{subfigure}
  \caption{Two equivalent computation graphs for training our model with two layers. (a) Sequential over $n$ tokens. (b) Parallel over $n$ tokens with $n$ iterations. Sub-graphs with the same color (except grey) represent identical data flow. Sub-graphs in grey are unused in loss computation.}
  \label{fig:iter-train}
\end{figure}

We leave the proof of the theorem to Appendix~\ref{apx:proof}. Here, we provide an intuitive explanation. We say the computation of a token (w.r.t. its hidden representations and top-layer KVs) is correct in an iteration of the second computation graph if and only if it is identical to that in the first computation graph. Since we have masked the diagonal of the attention matrix, the computation of the first token does not rely on any key or value (except for dummy zero vectors) and thus is correct in every iteration. For the second token, its computation relies on the KVs of the first token and thus is correct starting from the second iteration. In general, the computation of the $i$-th token relies on the KVs of the first $i-1$ tokens and by induction, it is correct starting from the $i$-th iteration. Therefore, after $n$ iterations, all the tokens are correctly computed. As a result, the computation sub-graphs of the cross-entropy losses of all the tokens are identical in the two graphs, and hence training processes following the two graphs are equivalent.

Essentially, the second computation graph replaces horizontal dependencies in the first graph with vertical dependencies, and it does not change the length of the longest dependency chain.
Consequently, although the second computation graph supports parallel training over all the tokens, it still requires the same $n$ iterations as the first graph. 
Next, we will trim the iterations first in terms of backpropagation and then in terms of forward propagation.

\subsubsection{Backpropagation: Gradient Stopping}
\label{sec:backpropagation}

We compute the cross entropy loss after the last iteration, which backpropagates through $n$ iterations, resulting in a large computation graph that is impossible to fit into GPU memory for large $n$.
To solve the issue, we follow the practice of gradient stopping in Transformer-XL \cite{dai-etal-2019-transformer} and backpropagate the loss only through the last $b$ iterations ($b \ll n$).

Notice that the KVs used in the last iteration come from the second-to-last iteration, so if we set $b=1$, then backpropagation would not reach the KVs and hence the model parameters used to calculate the KVs are not trained at all, which would result in a large performance degradation. We empirically find that with $b \geq 2$, the performance of our model is comparable with that of a standard transformer. To reduce memory consumption, we set $b=2$ by default, which means we only backpropagate through two iterations.

\subsubsection{Forward Propagation: Fast Convergence of KV}
\label{sec:fast-convergence}

With gradient stopping, the first $n-b$ iterations are solely used in forward propagation to compute KVs that are fed into the last $b$ iterations. When $n$ is large, it is still too costly to run $n-b$ iterations of forward propagation.
Fortunately, we observe that the values of KVs converge very fast over iterations and hence we do not have to run $n-b$ iterations to obtain the final KVs.
Figure~\ref{fig:kv-mse-1} shows the convergence of KVs of a randomly initialized model with the same configuration as TinyLlama \cite{zhang2024tinyllama}. 
The input is a randomly picked segment of text from the MiniPile \cite{kaddour2023minipile} dataset with 2048 tokens (i.e., $n=2048$). We measure the change of KVs over consecutive iterations using the mean squared error. 
As can be seen, KVs converge after only a few tens of iterations, with more warmup layers leading to faster convergence.
Therefore, we use $m$ iterations ($m \ll n$) to approximate the KVs of $n-b$ iterations. We empirically find that $m=7$ is sufficient for model training and larger values of $m$ do not further improve the performance.

\begin{figure}[tb]
  \centering
  \includegraphics[page=1,width=0.44\textwidth,trim=0 20 10 50,clip]{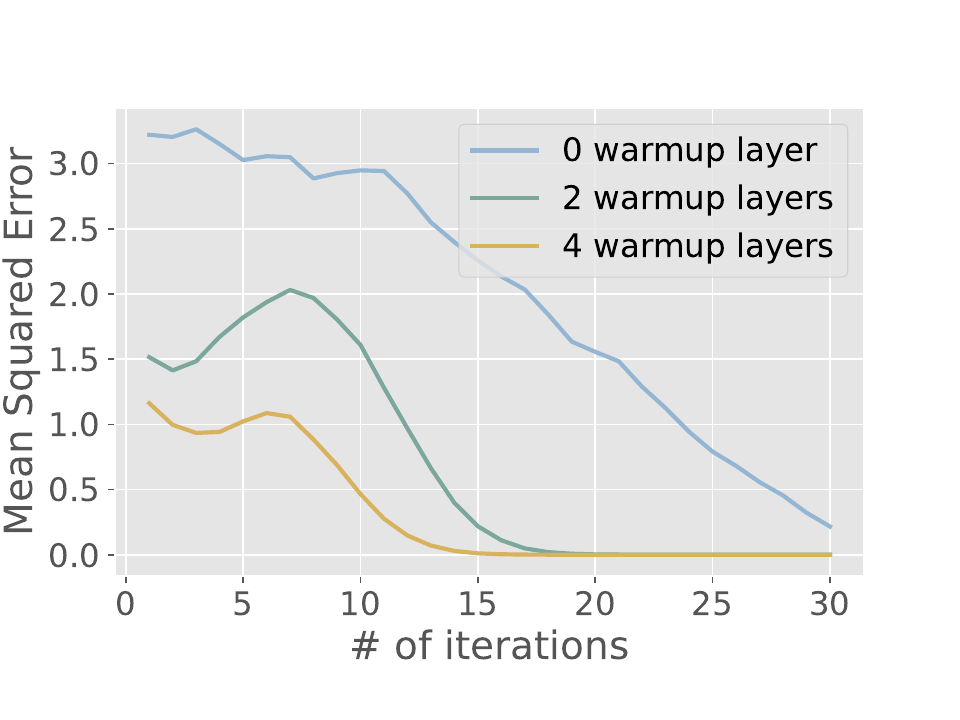}
  \caption{MSE of the KV before and after the $i$th iteration. The model is randomly initialized and tested with 2048 tokens.}
  \label{fig:kv-mse-1}
\end{figure}

\subsection{Inference with Prompts}
\label{sec:prompt}

It is straightforward to employ our model for autoregressive generation. However, our model cannot do parallel encoding of prompts like standard transformers for the same reason as it cannot do parallel training. Fortunately, since iterative computation of KVs is fast to converge, we can just perform iterative computation for the prompts for $m+b$ iterations. Typically, $m+b$ is far less than the number of tokens to generate, and thus the extra time spent in encoding the prompts is negligible.

\section{Experiments}
\label{sec:experiments}

We empirically verify the effectiveness of our method on the Llama model \cite{touvron2023llama}. We show that our method achieves significant memory reduction and throughput improvement as well as competitive performance in language modeling and downstream tasks compared with standard transformers. We also show that our method could effectively integrate with other memory-saving techniques.

\subsection{Generation Throughput}
\label{sec:throughput}

\begin{table*}[tb]
\centering
\small \setlength{\tabcolsep}{4pt}
\begin{tabular}{@{}ccccccccc@{}}
\toprule
\multirow{2}{2em}{GPU}      & \multirow{2}{*}{Model Size}                                 & \multirow{2}{*}{Seq. Length} & \multicolumn{3}{c}{Batch Size}                                                                                       & \multicolumn{3}{c}{Throughput (tokens/s)}                                                                              \\ \cmidrule(l){4-9} 
                          &                                                             &                              & Llama & \begin{tabular}[c]{@{}c@{}}Ours\\ $w=2$\end{tabular} & \begin{tabular}[c]{@{}c@{}}Ours\\ $w=10$\end{tabular} & Llama   & \begin{tabular}[c]{@{}c@{}}Ours\\ $w=2$\end{tabular} & \begin{tabular}[c]{@{}c@{}}Ours\\ $w=10$\end{tabular} \\ \midrule
\multirow{8}{2em}{RTX 3090} & \multirow{2}{*}{1.1B}                                       & 5+8187                       & 48    & 384 (8$\times$)                                      & 119 (2.5$\times$)                                    & 1424.96 & 4113.37 (2.9$\times$)                               & 2374.05 (1.7$\times$)                                \\
                          &                                                             & 5+2043                       & 239   & 1150 (4.8$\times$)                                  & 289 (1.2$\times$)                                    & 5142.86 & 10033.40 (2.0$\times$)                              & 7239.92 (1.4$\times$)                                \\ \cmidrule(l){2-9} 
                          & \multirow{5}{*}{7B}                                         & 5+8187                       & 1     & 12 (12$\times$)                                      & 4 (4$\times$)                                         & 32.02   & 151.91 (4.7$\times$)                                & 83.80 (2.6$\times$)                                  \\
                          &                                                             & 2048+2048                    & 2     & 23 (11.5$\times$)                                    & 8 (4$\times$)                                         & 56.98   & 171.65 (3.0$\times$)                                & 119.68 (2.1$\times$)                                 \\
                          &                                                             & 5+2043                       & 5     & 64 (12.8$\times$)                                    & 16 (3.2$\times$)                                      & 140.88  & 534.02 (3.8$\times$)                                & 315.38 (2.2$\times$)                                 \\
                          &                                                             & 512+512                      & 9     & 95 (10.6$\times$)                                   & 32 (3.6$\times$)                                     & 225.31  & 378.89 (1.7$\times$)                                & 380.60 (1.7$\times$)                                 \\
                          &                                                             & 512+1024                     & 7     & 72 (10.3$\times$)                                   & 16 (2.3$\times$)                                     & 174.11  & 401.92 (2.3$\times$)                                & 310.05 (1.8$\times$)                                 \\ \cmidrule(l){2-9} 
                          & \begin{tabular}[c]{@{}c@{}}30B\\ (CPU-offload)\end{tabular} & 512+1024                     & 4     & 83 (20.8$\times$)                                   & 23 (5.8$\times$)                                     & 0.23    & 5.99 (26.0$\times$)                                 & 1.63 (7.1$\times$)                                   \\ \midrule
\multirow{2}{2em}{A100}     & 7B                                                          & 2048+2048                    & 15    & 128 (8.5$\times$)                                   & 42 (2.8$\times$)                                      & 141.10  & 421.02 (3.0$\times$)                                & 315.09 (2.2$\times$)                                 \\ \cmidrule(l){2-9} 
                          & 30B                                                         & 2048+2048                    & 1     & 32 (32$\times$)                                      & 8 (8$\times$)                                         & 14.10   & 108.29 (7.7$\times$)                                & 77.65 (5.5$\times$)                                  \\ \bottomrule
\end{tabular}
\caption{Maximum generation batch size and throughput on an RTX 3090 (24GB) and an A100 (80GB) GPU respectively with different sequence lengths. Following \citet{zhang2023ho}, we use ``$x + y$'' to denote a prompt length of $x$ and a generation length of $y$.} 
\label{tab:throughput}
\end{table*}

We test our method with 1.1B, 7B, and 30B parameters on an NVIDIA GeForce RTX 3090 (24GB) GPU and an NVIDIA A100 (80GB) GPU respectively. The 1.1B model configuration follows that of TinyLlama \cite{zhang2024tinyllama} and the 7B and 30B model configuration follows that of the original Llama \cite{touvron2023llama}. We set $m=7, b=2$ and $w=\{2, 10\}$. Our implementation is based on HuggingFace Transformers \cite{wolf-etal-2020-transformers} with kernel replacement with FlashAttention 2 \cite{dao2023flashattention}, fused RMS norm, fused cross-entropy and fused SwiGLU.

Following FlexGen \cite{pmlr-v202-sheng23a}, the evaluation is conducted in an end-to-end fashion. For a prompt of length $s$, we let the model generate output sequences of length $n$ with batch size $b$. The latency $t$ is defined as the total time in seconds spent in processing the prompts and generating all the $bn$ tokens. The generation throughput is defined as $bn/t$ tokens per second.

Table~\ref{tab:throughput} compares the maximum batch sizes and throughput of standard Llama models and our models on the two types of GPUs. Note that some of the sequence lengths exceed the maximum input lengths that the models are trained on, but that does not affect batch size and throughput measurement. We still benchmark the batch sizes and throughput to show the potential of our method on other models allowing larger sequence lengths. It can be seen that our method achieves significantly larger batch sizes and higher throughput on all of the settings.

\begin{figure}[tb]
  \centering
  \includegraphics[page=1,width=0.44\textwidth,trim=0 20 0 50,clip]{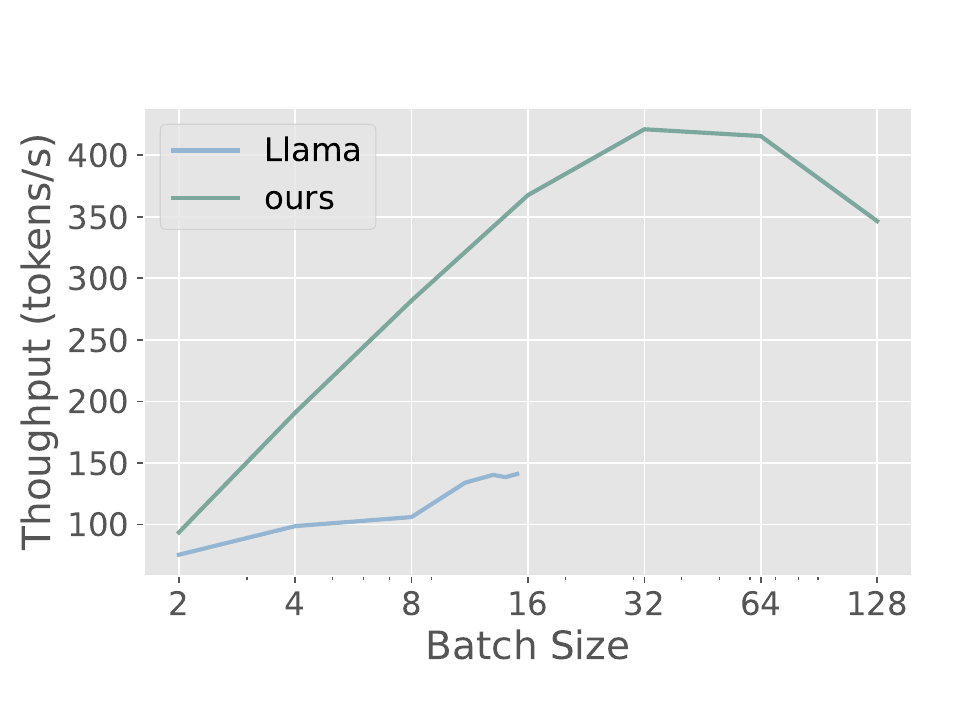}
  \caption{Throughput of 7B Llama and our model w.r.t. the batch size.}
  \label{fig:throughput}
\end{figure}

Notice that the maximum throughput is not necessarily achieved using the maximum batch size. Figure~\ref{fig:throughput} shows the throughput of 7B Llama and our model on an A100 GPU w.r.t. the batch size. The prompt length and generation length are both 2048. We find that when the batch size grows larger than 32, the throughput no longer increases and even decreases with a batch size of 128. This may indicate that the model operations are turning from memory-bound into compute-bound \cite{dao2022flashattention} and the throughput is limited by the computation power. Further improvement in throughput may require more efficient computation kernels instead of larger batch sizes.

We also would like to mention that the increased throughput is not necessarily due to the increased batch size. As shown in Figure~\ref{fig:throughput}, our model has much higher throughput than Llama even with the same batch sizes. We leave the detailed analysis to Appendix~\ref{apx:throughput}.

\subsection{Model Performance}
\label{sec:performance}

\begin{table*}[tb]
\centering
\begin{tabular}{@{}lcccccccc@{}}
\toprule
Model                   & HellaSwag & Obqa & WinoGrande & ARC-c & ARC-e & BoolQ & PIQA  & Avg   \\ \midrule
TinyLlama               & 44.58     & 30.2 & 50.99      & 25.00 & 46.38 & 60.46 & 68.93 & 46.65 \\
Ours ($w=2$)            & 42.22     & 30.6 & 49.64      & 24.74 & 43.10 & 61.38 & 66.49 & 45.45 \\
Ours ($w=10$)           & 44.74     & 31.0 & 51.70      & 24.83 & 46.38 & 61.38 & 67.90 & 46.84 \\ \bottomrule
\end{tabular}
\caption{Zero-shot accuracy on commonsense reasoning tasks.}
\label{tab:downstream}
\end{table*}

\begin{table}[tb]
\centering
\begin{tabular}{@{}lc@{}}
\toprule
Model                   & Dev ppl. \\ \midrule
TinyLlama               & 9.219    \\
Ours ($w=2$)            & 9.746    \\
Ours ($w=10$)           & 9.265    \\ \bottomrule
\end{tabular}
\caption{Perplexity on a 10M subset of the development set of SlimPajama.}
\label{tab:perplexity}
\end{table}

To evaluate the performance of our model in language modeling and downstream tasks, we pre-train from scratch two 1.1B models with $m=7$, $b=2$ and $w=\{2,10\}$. We use TinyLlama as our baseline, whose size is also 1.1B. We pre-train the models on a 100B subset of the SlimPajama dataset \cite{cerebras2023slimpajama}. The training details are consistent with those of TinyLlama \cite{zhang2024tinyllama}. All models are trained with AdamW \cite{loshchilov2018decoupled} with $\beta_1 = 0.9$ and $\beta_2 = 0.95$. The batch size is 2M tokens. We use a cosine learning rate schedule with a maximum learning rate of $4.0 \times 10^{-4}$ and a warmup of 200 steps. The final learning rate is $4.0 \times 10^{-5}$. We use a weight decay of 0.1 and gradient clipping of 1.0. The models are trained on 128 NVIDIA A800 (80GB) GPUs.

During evaluation, we perform inference in the standard left-to-right fashion instead of using the method of Section~\ref{sec:prompt}. We report the perplexity on a 10M subset of the development set of SlimPajama. We also test the zero-shot performance on commonsense reasoning tasks following \citet{zhang2024tinyllama}, including Hellaswag \cite{zellers-etal-2019-hellaswag}, OpenBookQA \cite{mihaylov-etal-2018-suit}, WinoGrande \cite{sakaguchi2021winogrande}, ARC-Easy and ARC-Challenge \cite{clark2018think}, BoolQ \cite{clark-etal-2019-boolq}, and PIQA \cite{bisk2020piqa}. The tests are conducted using the lm-eval-harness framework \cite{eval-harness}.
For these tasks, we encode the prompts with the same number of iterations ($m+b=9$) as in training. 

Table~\ref{tab:downstream} and \ref{tab:perplexity} show the results. The performance of our models is comparable to that of TinyLlama. In particular, our model with $w=10$ has almost no performance degradation, while achieving significantly higher generation throughput as evaluated in Section~\ref{sec:throughput}. Our model with $w=2$ has a small but noticeable decrease in performance for most of the tasks, but it achieves even higher increase in throughput. 

Despite the competitive performance and higher inference efficiency of our models, we note that pre-training our model costs about 3 times the time of pre-training TinyLlama with the same amount of data due to the iterative training process. Nevertheless, we believe that in most scenarios, a speedup in inference is worth a slowdown in training which is a one-time process.

\subsection{Integration with StreamingLLM}

\begin{figure*}[tb]
  \centering
  \includegraphics[page=1,width=\textwidth,trim=140 0 140 0,clip]{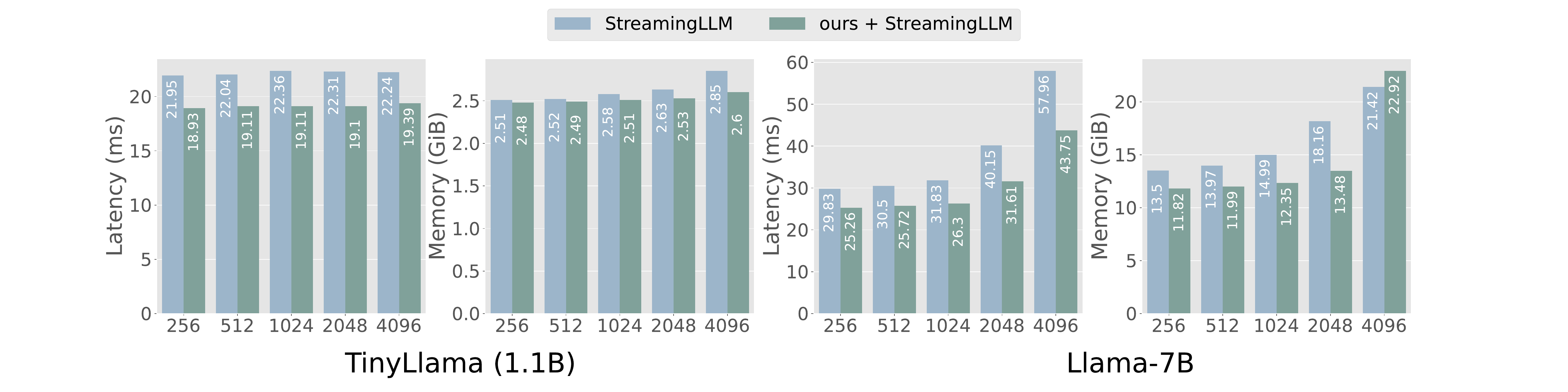}
  \caption{Comparison of latency per token and memory consumption of StreamingLLM and our model ($w=10$) integrated with StreamingLLM w.r.t. different cache sizes.}
  \label{fig:streaming}
\end{figure*}

We have previously mentioned that our method is orthogonal to other memory-saving techniques and can be easily integrated with them. Here we integrate our method with StreamingLLM \cite{xiao2024efficient}. StreamingLLM employs an attention sink that only preserves the KV cache of the first few tokens (four by default) and recent tokens, which empowers LLMs to process infinite-length inputs.


As shown in Figure~\ref{fig:streaming}, the integration of StreamingLLM and our model ($w=10$) achieves lower latency and memory consumption compared to the original StreamingLLM on different cache sizes (numbers of cached tokens). 


\begin{figure}[tb]
  \centering
  \includegraphics[page=1,width=0.44\textwidth,trim=0 10 0 40,clip]{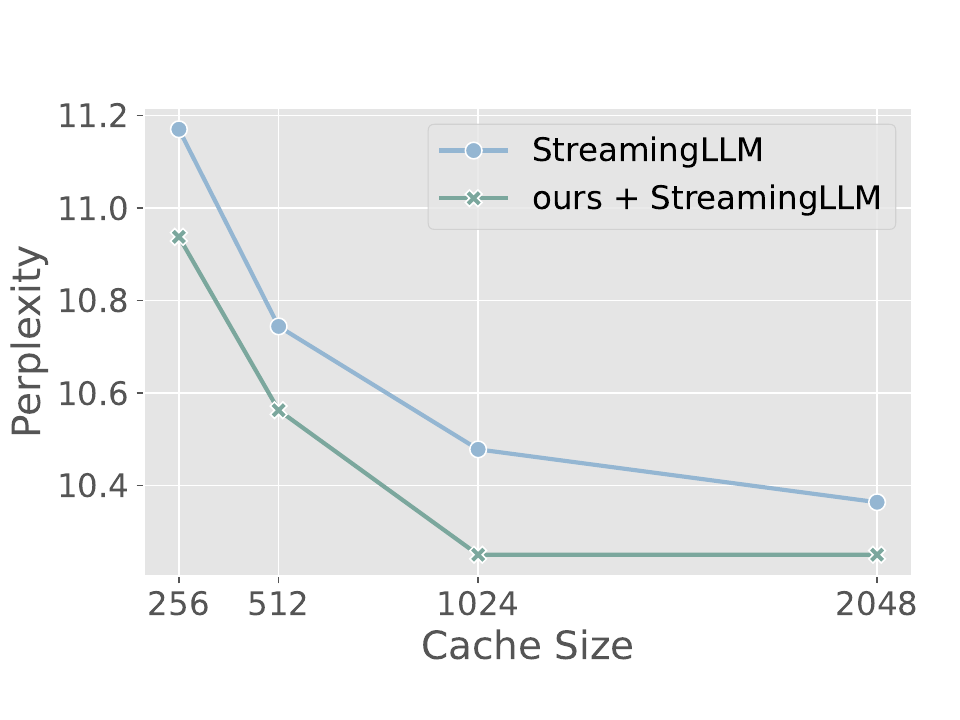}
  \caption{Comparison of StreamingLLM and our model integrated with StreamingLLM w.r.t. the cache size. We use 4 initial tokens for all settings. The results are collected on the first text sample of PG19 \cite{Rae2020Compressive}.}
  \label{fig:streaming-loss}
\end{figure}

\begin{figure}[tb]
  \centering
  \includegraphics[page=1,width=0.44\textwidth,trim=0 0 0 30,clip]{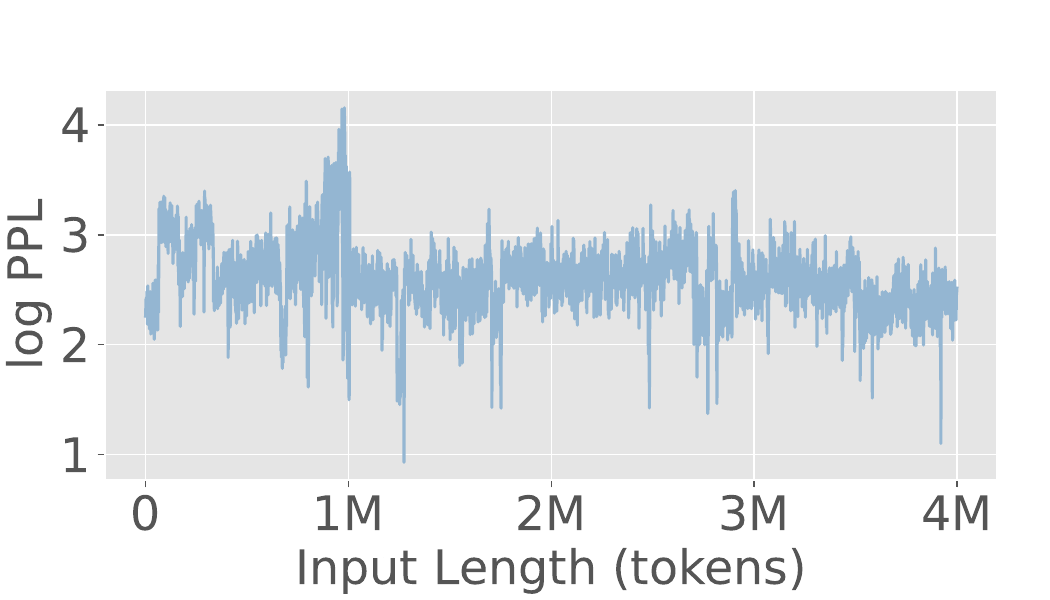}
  \caption{Language modeling perplexity of our model integrated with StreamingLLM on texts with 4M tokens. Following \citet{xiao2024efficient}, we use the concatenated test set of the PG19 dataset \cite{Rae2020Compressive} as the input.}
  \label{fig:streaming-4M}
\end{figure}

We further showcase that integration with our method does not hinder the ability of StreamingLLM to process infinite-length tokens. Specifically, we integrate StreamingLLM into our model ($w=10$) trained in Section~\ref{sec:performance} with different cache sizes and find that the integrated model achieves even lower perplexities than StreamingLLM as shown in Figure~\ref{fig:streaming-loss}. We also let the model handle inputs with a sequence length of four million tokens. As shown in Figure~\ref{fig:streaming-4M}, the integrated model can effectively process the input with the perplexity remaining stable.

\section{Analyses}
\label{sec:analysis}

In this section, we empirically analyze design choices in our method. 
For experiment details, please refer to Appendix~\ref{apx:details}.

\subsection{The Sandwich Configuration}
\label{sec:sandwich}

In Section~\ref{sec:model}, we propose to add some warmup layers to improve model performance. Note that inference efficiency is determined by the number of warmup layers $w$ and not by their placement. Here we ask the following question: \textit{with the number of warmup layers fixed, where should we place the warmup layers to achieve the best performance?}

\begin{table}[tb]
\centering
\begin{tabular}{@{}cccc@{}}
\toprule
\multirow{2}{*}{Model Size} & \multicolumn{3}{c}{Dev ppl.}    \\ \cmidrule(l){2-4} 
                            & all-bottom & all-top & sandwich \\ \midrule
50M                         & 14.556    & 221.850   & 14.069    \\
1.1B                        & 7.668    & 9.098       & 7.381   \\ \bottomrule
\end{tabular}
\caption{Model performance with different warmup layer placements ($w=2$).}
\label{tab:sandwich}
\end{table}

Table~\ref{tab:sandwich} shows the model performance in language modeling with two warmup layers (i.e., $w=2$) that are placed at the bottom, at the top, and with the default sandwich configuration. It can be seen that the sandwich style performs the best. A possible explanation is that top and bottom layers serve different functionalities (e.g., semantic vs. syntactic) and it makes more sense to warm up both than just one of them. 

\subsection{Number of Warmup Layers}
\label{sec:warmup}

Warmup layers serve as a bridge between the standard transformer and our model. The more warmup layers we keep, the more similar it is to the standard transformer, and the less memory we save. In this section, we ask the following question: \textit{how does the number of warmup layers $w$ affect the model performance and throughput?}

\begin{figure}[tb]
  \centering
  \includegraphics[page=1,width=0.46\textwidth,trim=0 20 0 30,clip]{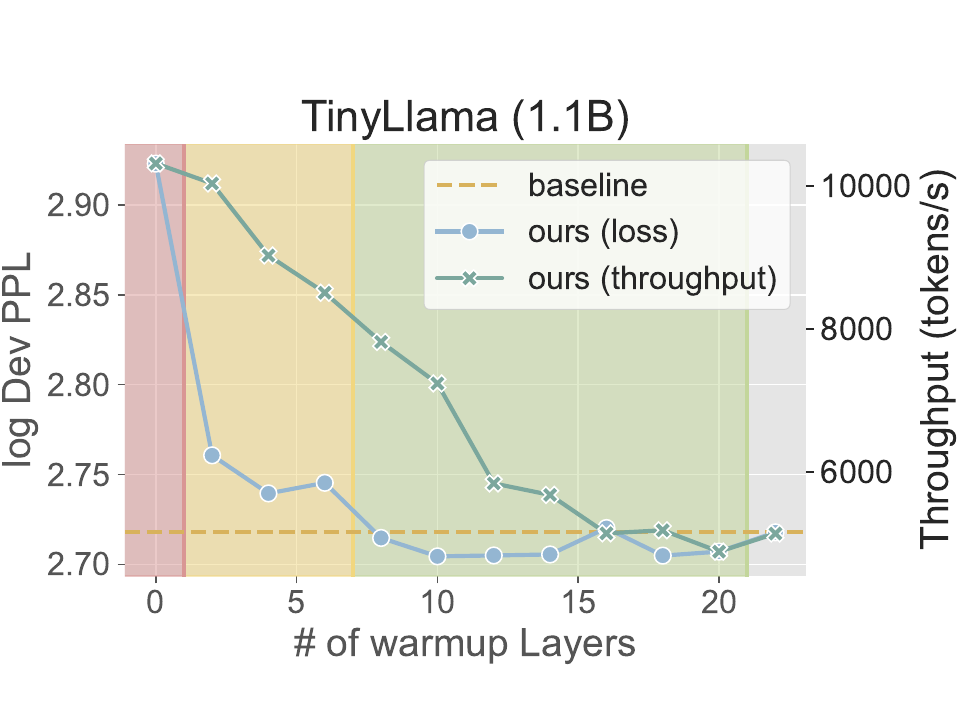}
  \caption{Effect of the number of warmup layers on model performance and throughput. The right-most point ($w=22$) denotes that all the layers are warmup layers and hence our model becomes exactly the standard transformer (the baseline). The throughput is tested on an RTX 3090 GPU with prompt length 5 and generation length 2043.}
  \label{fig:warmup}
\end{figure}

We test the model with 1.1B (22 layers) parameters and different numbers of warmup layers (Figure~\ref{fig:warmup}). Surprisingly, we find that the log dev perplexity does not monotonically decrease with the number of warmup layers. Without warmup layers (the red region), the model performance is significantly worse than that of the standard transformer. With only a few warmup layers (the yellow region), the model performance is greatly improved and close to that of the standard transformer, but its throughput is decreased at the same time. With more warmup layers (the green region), the model even outperforms the standard transformer, while suffering from further decrease in throughput.

The results point to a trade-off between model performance and throughput that is controllable by the number of warmup layers. If pursuing a high throughput, one could keep only a few warmup layers (the yellow region). If good performance is crucial, one could keep more warmup layers (the left part of the green region).

\subsection{Convergence of KV}
\label{sec:convergence}

In Section~\ref{sec:fast-convergence}, we have shown that KVs converge very fast for a random model and hence we use $m \ll n$ iterations to compute KVs. Here, we ask the following questions: \textit{how fast do KVs converge for a trained model and what value shall we pick for $m$?}


\begin{figure}[tb]
  \centering
  \includegraphics[page=1,width=0.44\textwidth,trim=0 60 0 60,clip]{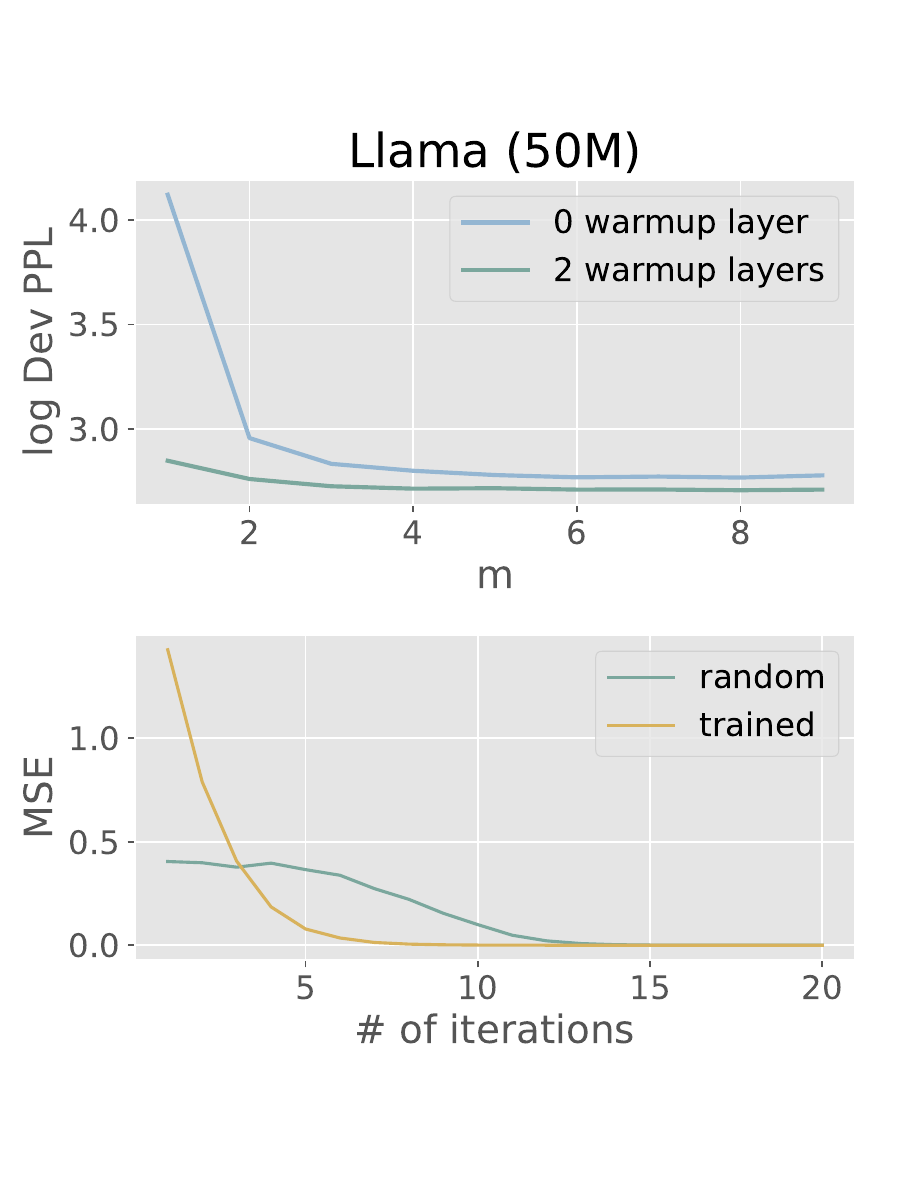}
  \caption{(Top) Experiment on a Llama (50M) with different values of $m$ during training. (Bottom) MSE of the KV before and after the $i$th iteration. The models are tested with 1024 tokens.}
  \label{fig:train-iter}
\end{figure}
%

We measure the convergence of KVs of a 50M Llama (Figure~\ref{fig:train-iter}, bottom) and the 1.1B model ($w=2$) pre-trained in Section \ref{sec:performance} (Figure~\ref{fig:kv-mse-2}). It can be seen that while a random model requires 15--20 iterations to converge, a trained model requires far fewer iterations. This hints at a small value of $m$ especially during late stages of training.
We then evaluate model performance when trained with different values of $m$ (Figure~\ref{fig:train-iter}, top). It can be seen that the model performance converges with $m \geq 6$, with more warmup layers leading to faster convergence. This justifies our default choice of $m=7$.

\begin{figure}[tb]
  \centering
  \includegraphics[page=1,width=0.44\textwidth,trim=0 20 30 30,clip]{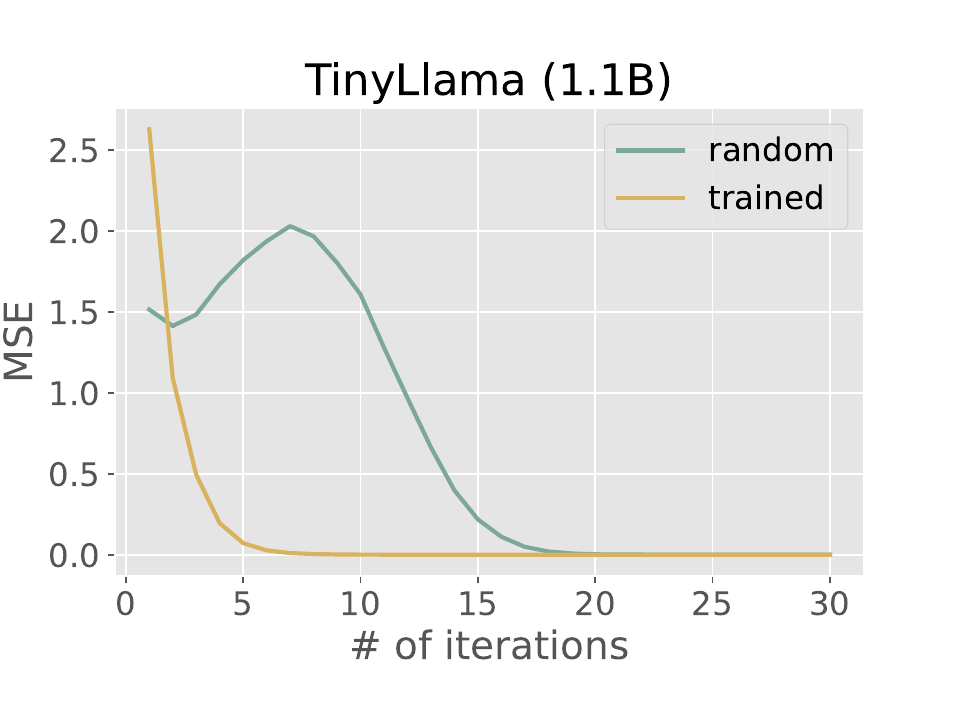}
  \caption{MSE of the KV before and after the $i$th iteration. The models are tested with 2048 tokens.}
  \label{fig:kv-mse-2}
\end{figure}

\section{Related Work}
Extensive research has been done on reducing KV cache for efficient inference of LLMs. With the exception of vLLM \cite{kwon2023efficient}, which proposes paged attention to reduce memory fragmentation of the KV cache from a system perspective, most recent works focus on compressing the KV cache by reducing the length of the cached KV sequence. \citet{jiang-etal-2023-llmlingua,jiang2023longllmlingua} accelerate model inference by compressing document-level prompts into short prompts. \citet{li-etal-2023-compressing} remove the redundancy in the input context. \citet{mu2023learning} train gist tokens to replace the reusable system prompts. \citet{ren-etal-2023-context} incrementally compress a specified span of tokens into compact ones to reduce the KV cache length. \citet{liu2023scissorhands} find that only pivotal tokens have a significant influence at a future step, so pruning unimportant tokens does not affect the performance. \citet{ge2023model} argue that the attention heads could be classified into different types and propose to apply different KV pruning strategies to different types of attention heads. \citet{xiao2024efficient,han2023lm} find that only initial tokens and recent tokens are crucial and propose to store only the KVs of these tokens to enable infinite-length context for LLMs. \citet{zhang2023ho} propose a KV cache eviction policy based on the summation of attention scores to only keep a small portion of the KV cache in memory. Unlike these previous methods, our method reduces the memory consumption of the KV cache by reducing the number of layers, which is orthogonal to these methods and can potentially be combined with these methods to further reduce the KV cache and improve inference efficiency.

Feedback Transformers \cite{fan2020addressing} aggregate hidden representations from all layers and use them as token memory. This is followed by KV projections to obtain the key-value pairs for all layers. Their experimental results show improved performance with this strategy, even when using only the hidden representation from the top layer. However, their sequential training process is time-costly and not practical for large models. Our method supports parallel training, which is more efficient and scalable.

\section{Conclusion}

In this paper, we propose a novel method to reduce the memory consumption and improve the throughput of LLMs by reducing the number of layers whose keys and values need to be computed and cached. We empirically show that our method achieves significant memory reduction and throughput improvement with negligible performance degradation. We also show that our method could effectively integrate with other memory-saving techniques like StreamingLLM. Future work includes designing more efficient training approaches, developing large-batch-friendly kernels, and verifying our method on larger and more complex LLMs. We hope that our work could provide a new perspective for improving inference efficiency of LLMs and inspire more research in this direction.

\section*{Limitations}

Though our method achieves impressive memory reduction and throughput improvement, we would like to point out its limitations from the following aspects:
\begin{itemize}
  \item Due to the iterative training, our model requires about $3\times$ the time to pre-train a model with the same amount of data. In other words, our method improves the inference efficiency at the cost of the training efficiency. A potential remedy is that if one has a pre-trained model, one could use it to initialize our model, which is empirically found to speed up the process of training. 
  \item Since our method requires iteratively processing the prompts, the throughput degrades when the prompts are much longer than the generation length, e.g., in document summarization. 
  Generally, our method is more suitable for tasks with a large generation length, such as translation, dialogue, question answering, CoT problem solving, etc.
\end{itemize}

\bibliography{anthology,custom}

\appendix

\section{Proof of The Training Theorem}
\label{apx:proof}

Here we formally prove Theorem~\ref{thm:train} in Section~\ref{sec:training}. Remember that we need to compute the loss for each token sequentially, but we propose to train all the tokens in parallel with $n$ iterations.

\train*

To prove the theorem, we first introduce the following lemma.

\begin{lemma}
  In the first computation graph, denote the final hidden representation of the $i$-th token as $h_i = f_{1,i}(x_1, x_2, \dots, x_i)$. In the second computation graph, denote the final hidden representation of the $i$-th token in iteration $t$ as $h^{(t)}_i = f^{(t)}_{2,i}(x_1, x_2, \dots, x_i)$. We have $f_{1,i} = f^{(t)}_{2,i}, \forall t \geq i$, no matter what the initial KVs are.
\end{lemma}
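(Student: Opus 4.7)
The plan is to prove the lemma by strong induction on the token index $i$, with the diagonal mask doing most of the real work. The base case $i=1$ is immediate: with the diagonal masked, the first token has nothing to attend to, so its bottom-up pass depends only on $x_1$ and the dummy zero KVs. Hence $h^{(t)}_1 = h_1$ for every $t \geq 1$, irrespective of the initial KVs fed into iteration $1$ or of any other token's state.

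For the inductive step, I would fix $i \geq 2$ and assume $h^{(t')}_j = h_j$ for every $j < i$ and every $t' \geq j$. Within iteration $t$ of the second graph, the bottom-up pass for token $i$ consumes, at every one of its layers, keys and values drawn from the top-layer KVs of tokens $1, \ldots, i-1$ produced in iteration $t-1$: causal masking removes later tokens and the diagonal mask removes token $i$ itself. Because those top-layer KVs are obtained from $h^{(t-1)}_1, \ldots, h^{(t-1)}_{i-1}$ via the shared top-layer $W_K, W_V$, the output $h^{(t)}_i$ is a function only of $x_i$ together with those prior top-layer hidden representations. For $t \geq i$ we have $t - 1 \geq i - 1 \geq j$ for every $j < i$, so the induction hypothesis yields $h^{(t-1)}_j = h_j$; the inputs to this bottom-up pass then coincide with those of the sequential pass in the first graph, and since the bottom-up function (same parameters, same residual/attention/MLP blocks) is shared across the two graphs, the outputs must coincide as well, i.e.\ $h^{(t)}_i = h_i$.

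A couple of small bookkeeping points will deserve care but are not real obstacles. First, I want to make explicit that only the top-layer hidden state of a previous token feeds into the KVs that the current token consumes, so matching $h^{(t-1)}_j$ with $h_j$ is enough and we never need to track intermediate-layer states across iterations. Second, the ``no matter what the initial KVs are'' clause falls out of the induction for free: by iteration $i$, the top-layer states of tokens $1, \ldots, i-1$ have all been produced by bottom-up passes whose inputs depend only on $x_1, \ldots, x_{i-1}$, so the arbitrary initial KVs used at iteration $1$ no longer influence the computation of token $i$. The main subtlety, if any, is simply the indexing: the bound $t \geq i$ is exactly what guarantees $t - 1 \geq j$ for every previous $j$, which is what lets the induction step slide one iteration forward per token. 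Once the lemma is in hand, Theorem~\ref{thm:train} follows by setting $t = n$, since then all $h^{(n)}_i$ agree with $h_i$ and therefore the cross-entropy losses, and their dependence on the shared parameters, are identical in the two graphs.
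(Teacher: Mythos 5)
Your proposal is correct and follows essentially the same route as the paper's proof: both arguments hinge on the diagonal mask making token $1$ KV-independent, and on the observation that token $i$ at iteration $t$ consumes iteration-$(t-1)$ KVs of tokens $j<i$ with $t-1 \geq i-1 \geq j$, so the induction closes; you merely organize the double induction over the token index while the paper organizes it over the iteration index, and the paper carries the equality of the top-layer KVs ($KV_j = KV^{(t)}_j$) explicitly in the induction hypothesis where you recover it from the identity of the full bottom-up pass.
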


\begin{proof}
  In the first computation graph, denote the KVs of the $i$-th token as ${KV}_i$ and that under parallel training with iteration $t$ as ${KV}^{(t)}_i$.

  Since the basic networks are the same for the two computation graphs, we further denote $g_i$ as $h_i = f_{1,i}(x_1, x_2, \dots, x_i) = g_i(x_1, x_2, \dots, x_i, {KV}_1, {KV}_2, \dots, {KV}_{i-1})$, as well as $h^{(t)}_i = f^{(t)}_{2,i}(x_1, x_2, \dots, x_i) = g_i(x_1, x_2, \dots, x_i, {KV}^{(t-1)}_1, \dots, {KV}^{(t-1)}_{i-1})$. The only difference in the computation of the final hidden representations lies in the KVs.

  We prove the lemma by induction. For the base case, since we have removed the diagonal of the attention matrix, $h_1$ and $h^{(t)}_1$ does not rely on any key or value. So we have $h_1 = f_{1,1}(x_1) = g_i(x_1) = f^{(t)}_{2,1}(x_1) = h^{(t)}_1, \forall t$. That is $f_{1,1} = f^{(t)}_{2,1}, \forall t$. Since the computation graphs of the first token are the same, ${KV}_1$ and ${KV}^{(t)}_1$ are the same for all $t$.
  
  For the inductive step, we assume that $\forall i \leq T, f_{1,i} = f^{(T)}_{2,i} = g_i, {KV}_i = {KV}^{(T)}_i$. Then for iteration $T+1$:

 $\forall i \leq T+1$, the computation of the $i$-th token relies on the KVs of the first $i-1$ tokens. That is, $h^{(T+1)}_i = f^{(T+1)}_{2,i}(x_1, x_2, \dots, x_i) = g_i(x_1, x_2, \dots, x_i, {KV}^{(T)}_1, \dots, {KV}^{(T)}_{i-1})$.

  Since $i \leq T+1$, we have $i-1 \leq T$. By induction we have ${KV}^{(T)}_{i-1} = {KV}_{i-1}$. Thus, we have $g_i(x_1, x_2, \dots, x_i, {KV}^{(T)}_1, \dots, {KV}^{(T)}_{i-1}) = g_i(x_1, x_2, \dots, x_i, {KV}_1, \dots, {KV}_{i-1}) = f_{1,i}(x_1, x_2, \dots, x_i) = h_i$. The computation graphs are the same for the first $T+1$ tokens, then we have ${KV}_i = {KV}^{(T+1)}_i$.

  Thus, we have $f_{1,i} = f^{(t)}_{2,i}, \forall i \leq t$. The proof does not rely on initial KVs, so the conclusion holds for any initial KVs.
\end{proof}

Let $t = n$, we have the entire computation graphs are equivalent. Therefore, we have proved Theorem~\ref{thm:train}.

From the lemma, we could learn one more thing: The parallel training has the same computation graph as the sequential training when the iteration $t \geq n$. This indicates that the parallel training is theoretically guaranteed to converge to the same solution as the sequential training. Once it is converged, it will not diverge. Our work finds that the KVs converge much faster than the theoretical $n$ iterations, which significantly reduces the training time.

\section{Model and Training Details}
\label{apx:details}

\begin{table*}[tb]
\centering
\begin{tabular}{@{}lcccc@{}}
\toprule
Model Size         & 50M  & 1.1B & 7B    & 30B   \\ \midrule
Hidden Size        & 512  & 2048 & 4096  & 6656  \\
Intermediate Size  & 1024 & 5632 & 11008 & 17920 \\
Max Trained Length & 1024 & 2048 & --    & --    \\
\# Layers          & 8    & 22   & 32    & 60    \\
\# Attention Heads & 8    & 32   & 32    & 52    \\
\# KV Heads        & 4    & 4    & 32    & 52    \\
RMS Norm eps       & 1e-5 & 1e-5 & 1e-6  & 1e-6  \\
Vocab Size         & \multicolumn{4}{c}{32000}   \\ \bottomrule
\end{tabular}
\caption{Model configurations.}
\label{tab:model}
\end{table*}

\begin{table*}[tb]
\centering
\begin{tabular}{@{}lcccc@{}}
\toprule
Section                     & \multicolumn{2}{c}{\ref{sec:sandwich}} & \ref{sec:warmup}        & \ref{sec:convergence}          \\ \cmidrule(l){2-5} 
Model Size                  & 50M          & 1.1B     & 1.1B       & 50M          \\ \midrule
$m$                         & 7            & 7        & 7          & --           \\
$b$                         & 2            & 2        & 2          & 2            \\
$w$                         & 2            & 2        & --         & 2            \\
lr scheduler                & \multicolumn{4}{c}{cosine}                          \\
max. lr                     & 3e-4         & 3e-4     & 4e-4       & 3e-4         \\
min. lr                     & 0            & 0        & 4e-5       & 0            \\
optimizer                   & \multicolumn{4}{c}{AdamW}                           \\
$\beta_1$                   & 0.9          & 0.9      & 0.9        & 0.9          \\
$\beta_2$                   & 0.999        & 0.999    & 0.95       & 0.999        \\
batch size (tokens)         & 16K          & 256K     & 256K       & 16K          \\
warmup ratio                & 0.015        & 0.015    & 0.024      & 0.015        \\
weight decay                & 6.6e-6       & 6.6e-6   & 1e-1       & 6.6e-6       \\
gradient clipping           & 1.0          & 1.0      & 1.0        & 1.0          \\
initialize from pre-trained & yes          & yes      & no         & no           \\
epochs                      & 3            & 1        & 2B tokens  & 3            \\
Data                        & WikiText-103 & MiniPile & SlimPajama & WikiText-103 \\
GPU                         & RTX 3090x1   & A100x8   & A800x16    & RTX 3090x1   \\ \bottomrule
\end{tabular}
\caption{Training details for Section~\ref{sec:analysis}.}
\label{tab:train}
\end{table*}

We provide the model configurations and training details in Table~\ref{tab:model} and \ref{tab:train}. The 7B and 30B model configurations are consistent with those of the original Llama \cite{touvron2023llama}. The 1.1B model configuration follows that of TinyLlama \cite{zhang2024tinyllama}. We use the WikiText-103 \cite{merity2017pointer} (licensed under CC-BY-SA 3.0), MiniPile \cite{kaddour2023minipile} (licensed under MIT) and SlimPajama \cite{cerebras2023slimpajama} (various licenses depending on the data source) as our datasets. Our use of the datasets is consistent with their intended use.

The training of TinyLlama in Section~\ref{sec:experiments} takes 14:42:59, while our models take 1 day, 16:44:16 (2.77x, $w=2$) and 1 day, 15:52:38 (2.71x, $w=10$), respectively. The training took place before we optimize our codes, so the training time could be further reduced, especially for the model with $w=10$.

\section{More Analyses}

In this section, we provide more analyses beyond those in Section~\ref{sec:analysis}.

\subsection{Initialize with Pre-trained Models}

\begin{table}[tb]
\centering
\begin{tabular}{@{}lcc@{}}
\toprule
Model                   & Dev ppl. & Avg. \\ \midrule
TinyLlama               & 9.219    & 46.65 \\
Ours ($w=2$)                   & 9.746    & 45.45 \\ \midrule
\textit{TinyLlama-2.5T} & 7.822    & 53.97 \\
\textit{TinyLlama-500B} & 9.046    & 48.28 \\
Ours ($w=2$, init w/ 2.5T)     & 8.514    & 49.55 \\ \bottomrule
\end{tabular}
\caption{Dev ppl: Perplexity on a 10M subset of the validation set of SlimPajama. Avg: Average zero-shot accuracy on commonsense reasoning tasks. Models are trained on a 100B subset of SlimPajama. The models with italic fonts are from TinyLlama checkpoints.}
\label{tab:init-perplexity}
\end{table}

Since our model structure resembles that of the standard transformer, we could initialize our model with pre-trained models. For $W_K, W_V$ of the middle layers, we just ignore the parameters. Experiments show that initialization with pre-trained models could effectively speed up the training process (Table~\ref{tab:init-perplexity}). Though our model has a different computation graph from the standard transformer, it could still benefit from the pre-trained models. Our model initialized with a TinyLlama checkpoint trained on 2.5T tokens achieves a perplexity of 8.514, which is much better than the randomly initialized model. It is even better than the TinyLlama checkpoint trained on 500B tokens. Thus, if the pre-trained models are available, initializing our model with them could save a lot of training time.

\subsection{Iterations with Gradients}

In Section~\ref{sec:backpropagation} we set $b=2$ by default. What if we use larger $b$? To make sure that the inference process is equivalent (specifically, encoding the prompts), we fix $m+b=9$ and experiment with different values of $b$.

Figure~\ref{fig:iter-grad} shows the perplexity of a 50M Llama with different values of $b$. Though from $b=2$ to $b=3$ the perplexity decreases, for $b=4$ the perplexity increases. We further confirm that for $b=4$ the KVs do not converge as fast as for $b=2$.

Experiments on a 1.1B model (Table~\ref{tab:encoders-ppl}) show that the perplexity increases with $b$ and the training time also increases. From the training curve, we find that larger $b$ leads to more unstable training, thus the model is harder to converge. Therefore, we set $b=2$ by default.

\begin{figure}[tb]
  \centering
  \includegraphics[page=1,width=0.44\textwidth,trim=0 20 0 50,clip]{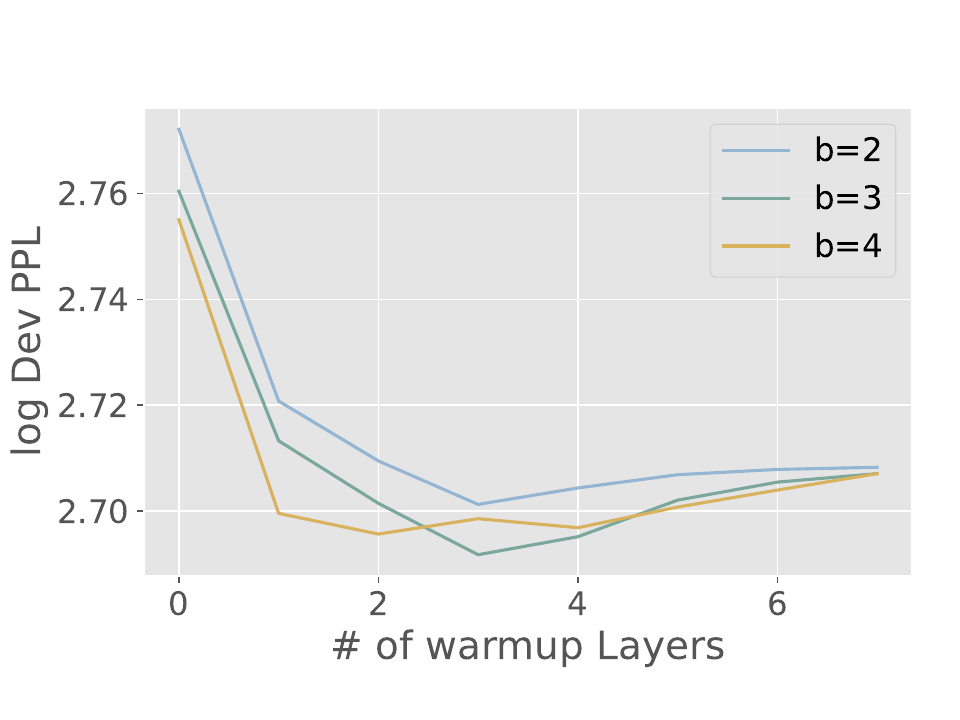}
  \caption{Effect of $b$ on a 50M model with different number of warmup layers.}
  \label{fig:iter-grad}
\end{figure}

\begin{table}[tb]
\centering
\begin{tabular}{@{}lcc@{}}
\toprule
Model                & Dev ppl. & Train Time \\ \midrule
Ours ($b=2$)         & 10.390   & 8h         \\
Ours ($b=3$)         & 10.476   & 10h        \\
Ours ($b=4$)         & 10.885   & 13h        \\ \bottomrule
\end{tabular}
\caption{Effect of $b$ on a 1.1B model.}
\label{tab:encoders-ppl}
\end{table}

\subsection{KV Loss for Less Iterations}

In Section~\ref{sec:convergence} we have shown that the KVs converge very fast for a trained model, yet we still want to make it converge even faster, saving both training and inference time costs. An intuitive idea is to add an MSE loss to the KVs before and after the last iteration to force the KVs to converge. We call this term the ``KV Loss''. Our experiments (Table~\ref{tab:kv-loss}) show that for small data with small $w$, the KV loss could lead to better performance. While for large data or large $w$, the KV loss hurts the performance. This is probably because the KVs are not converged at the beginning of training and the KV loss helps the KVs converge. However, when the KVs are already converged, the KV loss could slow down the training process. In our method, we do not use the KV loss.

\begin{table}[tb]
\centering
\begin{tabular}{@{}cccc@{}}
\toprule
Model Size            & $w$ & KV Loss & Dev ppl. \\ \midrule
\multirow{4}{*}{50M}  & 0   & no      & 15.965   \\
                      & 0   & yes     & 15.610   \\
                      & 2   & no      & 15.004   \\
                      & 2   & yes     & 15.065   \\ \midrule
\multirow{2}{*}{1.1B} & 2   & no      & 9.746    \\
                      & 2   & yes     & 10.073   \\ \bottomrule
\end{tabular}
\caption{Effect of the KV loss on a 50M and a 1.1B model. The 50M model is trained on WikiText-103 with 3 epochs and the 1.1B model is trained on a 100B subset of SlimPajama.}
\label{tab:kv-loss}
\end{table}

\subsection{Encode Prompts with Different Number of Iterations}

Though we set $m=7, b=2$ during training, it does not necessarily mean that we have to encode the prompts with 9 iterations. Is it possible to encode the prompts with less iterations? What if we encode the prompts with more iterations?

We treat the token segments as prompts and test the model trained in Section~\ref{sec:performance} with different numbers of iterations during encoding. As shown in Figure~\ref{fig:iter-inference}, the perplexity increases when the number of iterations is reduced, but still in a reasonable scale if we only reduce one or two iterations. The more warmup layers there are, the more stable the performance is. Increasing the number of iterations does not noticeably affect the performance. Thus, one could make a trade-off to set the proper number of iterations during inference to balance the time encoding prompts and the quality of generation texts.

\begin{figure}[tb]
  \centering
  \includegraphics[page=1,width=0.44\textwidth,trim=0 20 0 50,clip]{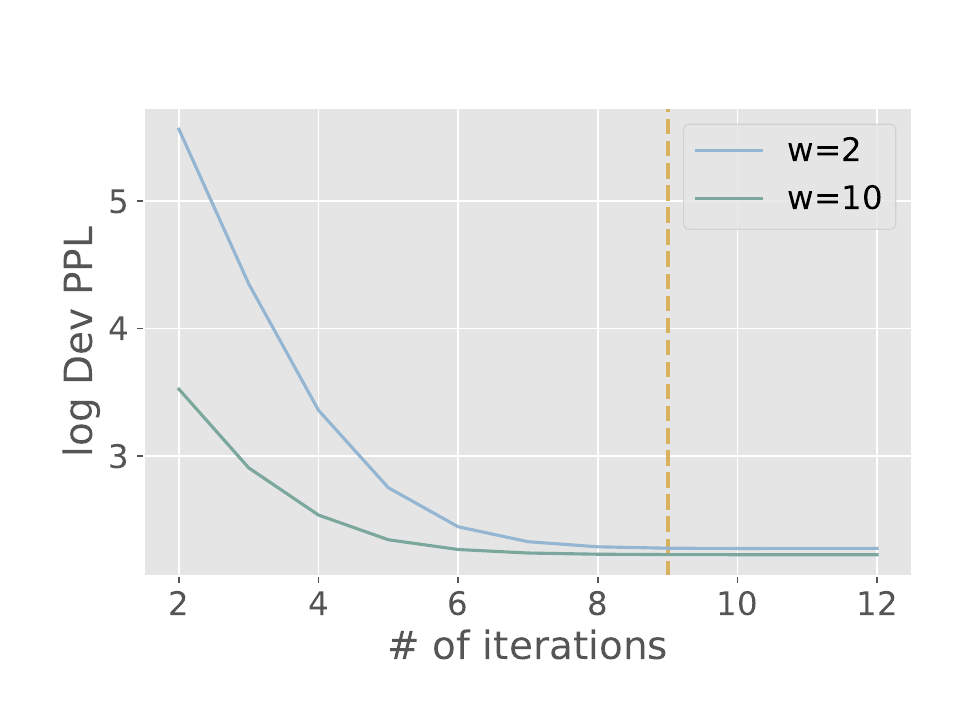}
  \caption{Performance of the models with different numbers of iterations for prompt encoding.}
  \label{fig:iter-inference}
\end{figure}

\begin{table*}[tb]
\centering
\begin{tabular}{@{}ccccc@{}}
\toprule
Model Size           & Model         & Batch Size & Latency (s) & Throughput (token/s) \\ \midrule
\multirow{3}{*}{7B}  & Llama         & 15         & 217.71      & 141.10               \\
                     & Ours ($w=2$)  & 16         & 89.15       & 367.56               \\
                     & Ours ($w=10$) & 16         & 131.43      & 249.32               \\ \midrule
\multirow{3}{*}{30B} & Llama         & 1          & 145.23      & 14.10                \\
                     & Ours ($w=2$)  & 1          & 101.12      & 20.25                \\
                     & Ours ($w=10$) & 1          & 106.99       & 19.14                \\ \bottomrule
\end{tabular}
\caption{Inference latency and throughput of the models with different model sizes. The models are tested on an A100 (80GB) GPU with prompt length 2048 and generation length 2048. The batch size is (approximately) the largest batch size for standard Llama model.}
\label{tab:latency}
\end{table*}

\subsection{Model Performance With Respect to Token Position}

The long-context performance of LLMs highly relies on KV cache \cite{xiao2024efficient,han2023lm, adnan2024keyformer}. To verify that the performance of our model does not degrade under long context, we test the perplexity of our model with different token positions on PG19 \cite{Rae2020Compressive}. Despite the fact that we only compute the KVs of a few layers, the performance of our model does not degrade with the token position (Figure~\ref{fig:token-position}) and is comparable to that of TinyLlama. Due to the limitation of computational resources, we only train models with context length 2048.

\begin{figure}[tb]
  \centering
  \includegraphics[page=1,width=0.44\textwidth,trim=0 20 20 50,clip]{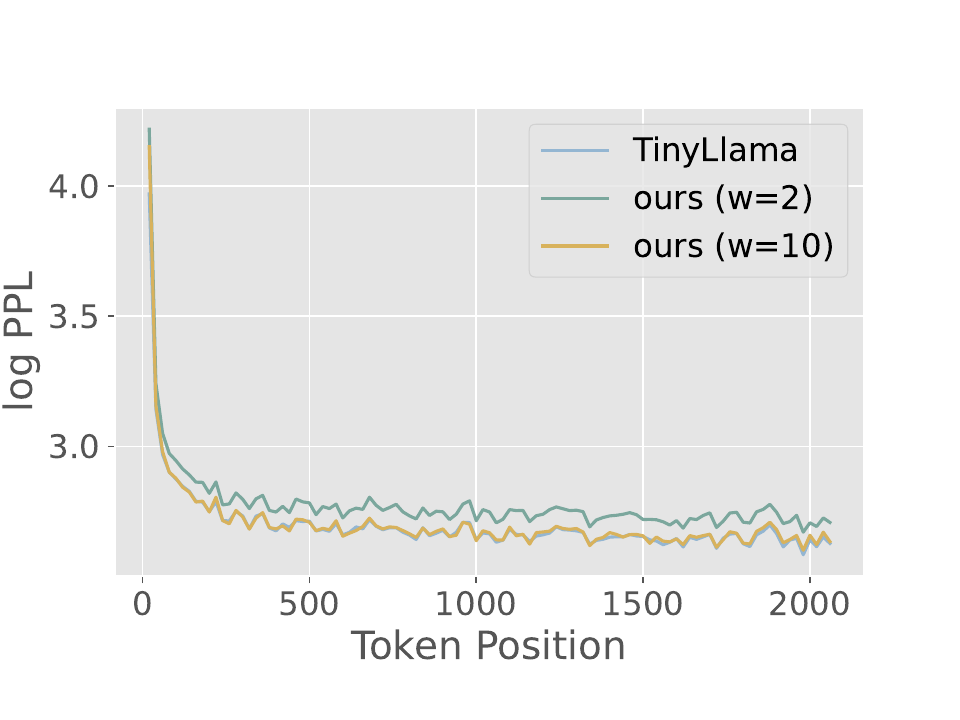}
  \caption{Perplexity of the models with different token positions on PG19. We use the concatenated test set of the PG19 dataset as the input.}
  \label{fig:token-position}
\end{figure}

\subsection{Improvement of Throughput Not Necessarily Due to Larger Batch Size}
\label{apx:throughput}

In Section~\ref{sec:throughput}, we show that our model achieves higher generation throughput than standard transformers. While one might assume that this improvement is solely due to a larger batch size, Figure~\ref{fig:throughput} demonstrates that our model outperforms the standard transformer even with the same batch size. Additionally, Table~\ref{tab:latency} reports reduced latency for our model when compared with the standard transformer with the same batch size. This suggests that our method can also benefit scenarios that require fast response. The exact reason for this phenomenon is not yet clear and we speculate that it could be attributed to factors such as the reduced calculation of KVs, the decreased memory consumption enabling faster memory transfer and access, and various implementation details.

\end{document}